\theoremstyle{plain}
\newtheorem{theorem}{Theorem}[section]
\newtheorem{proposition}[theorem]{Proposition}
\theoremstyle{definition}
\newtheorem{definition}[theorem]{Definition}
\theoremstyle{remark}
\newcommand\given{\,\vert\,}  %
\DeclarePairedDelimiterX{\divergence}[2]{(}{)}{#1\,\delimsize\|\,#2}
\newcommand{\E}[2]{\mathbb{E}_{#1}\left[#2\right]} %
\DeclarePairedDelimiter\abs{\lvert}{\rvert}  %
\newcommand\jurl[1]{\href{https://#1}{\nolinkurl{#1}}}
\DeclareMathOperator{\X}{\mathcal{X}}
\DeclareMathOperator{\BigO}{\mathcal{O}}
\newcommand{\Y}{\mathcal{Y}}
\newcommand{\changed}[1]{#1}
\newcommand{\changedtwo}[1]{#1}
\newcommand{\changedthree}[1]{#1}
\newcommand{\eclass}[1]{[#1]}
\begin{document}

\twocolumn[
\icmltitle{Random Edge Coding: One-Shot Bits-Back Coding of Large Labeled Graphs}
\begin{icmlauthorlist}
\icmlauthor{Daniel Severo}{uoft,vector}
\icmlauthor{James Townsend}{uva}
\icmlauthor{Ashish Khisti}{uoft}
\icmlauthor{Alireza Makhzani}{uoft,vector}
\end{icmlauthorlist}

\icmlaffiliation{uoft}{Department of Electrical and Computer Engineering, University of Toronto}
\icmlaffiliation{vector}{Vector Institute for Artificial Inteligence}
\icmlaffiliation{uva}{Amsterdam Machine Learning Lab (AMLab), University of Amsterdam}

\icmlcorrespondingauthor{Daniel Severo}{d.severo@mail.utoronto.ca}
\icmlcorrespondingauthor{Alireza Makhzani}{makhzani@vectorinstitute.ai}

\icmlkeywords{lossless compression, edge-exchangeable models, polyas urn, generative models, graph compression, bits-back coding, one-shot compression, entropy coding, ANS}

\vskip 0.3in
]

\printAffiliationsAndNotice{}  %

\begin{abstract}
\changedthree{
We present a one-shot method for compressing large labeled graphs called \emph{Random Edge Coding}.
When paired with a parameter-free model based on Pólya's Urn, the worst-case computational and memory complexities scale quasi-linearly and linearly with the number of observed edges, making it efficient on sparse graphs, and requires only integer arithmetic.
Key to our method is bits-back coding, which is used to sample edges and vertices without replacement from the edge-list in a way that preserves the structure of the graph.
Optimality is proven under a class of random graph models that are invariant to permutations of the edges and of vertices within an edge.
Experiments indicate Random Edge Coding can achieve competitive compression performance on real-world network datasets and scales to graphs with millions of nodes and edges.}
\end{abstract}

\section{Introduction}

\emph{Network data}, such as social networks and web graphs \cite{newman2018networks}, can be represented as large graphs, or multigraphs, with millions of nodes and edges corresponding to members of a population and their interactions.
To preserve the underlying community structure, as well as other relational properties, algorithms for compressing and storing network graphs must be \emph{lossless}.

\changed{
Entropy coding is a popular lossless compression technique that makes use of a probabilistic model over the observed data.
The main advantage of this approach over ad hoc methods is that it enables the use of existing domain knowledge on statistical modelling, which in the case of graphs is a well studied field (see Section \ref{sec:background-modeling}).
Bridging network modeling and data compression has yielded significant gains in compression performance with little additional effort by the research community \cite{steinruecken2014b}.
For example, BB-ANS \cite{townsend2019} allowed powerful latent variable models to be easily repurposed for lossless data compression including variational autoencoders \cite{townsend2019}, diffusion models \cite{kingma2021variational}, and integer discrete flows \cite{hoogeboom2019integer, berg2020idf++}, leading to state-of-the-art compression performance on image, speech \cite{havtorn2022benchmarking} and smart meter time-series \cite{jeong2022lossless} data.
In many domains, current state-of-the-art compression algorithms such as CMIX \cite{knoll2012machine} and DeepZip \cite{7fcb664b03ac4d6497048954d756b91f} use entropy coding together with a powerful neural network model.}

\begin{figure}[t]
    \centering
    \includegraphics[width=0.9\columnwidth]{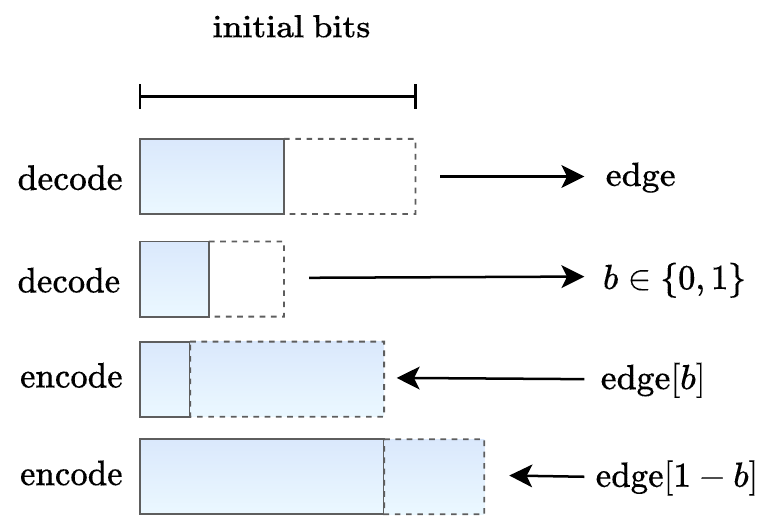}
    \caption{A diagram of our method, Random Edge Coding, compressing a single edge. 
    The ANS stack is represented in blue.
    First, a random edge is sampled without replacement from the edge-list using an ANS decode.
    Then, a binary indicator $b$ is decoded that defines the order in which vertices will be encoded: first $\text{edge}[b]$ then $\text{edge}[1 - b]$.}
    \label{fig:bb-exg}
\vskip -0.2in
\end{figure}

Compressing a single network is a \emph{one-shot} lossless compression problem.
This precludes the use of methods that require amortizing over repeated independent observations to reach the fundamental lower bound on lossless compression: the entropy \cite{cover1999elements}.
To minimize the average compression rate in the one-shot setting, the optimal number of bits to allocate for a single instance is equal to the negative log-likelihood (also known as the information content) under the estimated model.

The entropy coding of a network can, in general, be done in quadratic time with respect to the number of vertices by storing binary variables indicating the absence or presence of each possible edge in the graph.
However, most real-world networks are \emph{sparse} in the sense that the number of edges is significantly lower than the maximum number of possible edges.
Therefore, algorithms with sub-quadratic complexity in the number of observed edges are more attractive for compressing network data than those that scale quadratically with the number of nodes.

\changedthree{In this work we present a one-shot entropy coder for large labeled graphs called \emph{Random Edge Coding} (REC).}
Our method is optimal under a broad class of distributions referred to as \emph{edge-permutation invariant} (\Cref{def:fex}) and can achieve competitive performance on real-world networks as we show in \Cref{sec:experiments}.
When paired with Pólya's Urn \cite{mahmoud2008polya}, a parameter-free model described in \Cref{sec:polya}, our method requires only integer arithmetic and the worst-case computational and memory complexities scale quasi-linearly and linearly with the number of observed edges in the graph.
\changedthree{REC is applicable to both simple and non-simple labeled graphs, with directed or undirected edges, as well as hyper-graphs}

REC uses \emph{bits-back coding} \cite{frey1996free, townsend2019} to sample edges and vertices without replacement from the graph's edge-list, similarly to \cite{severo2021compressing, severo2021your}.
Sampling is done by decoding from a shared random seed, which also stores the final message (i.e. the bits of the graph).

\changed{
Recent methods for lossless compression of graphs are reviewed in \Cref{sec:related-work}.
We define the notation used throughout the paper in \Cref{sec:notation}.
In \Cref{sec:background} we discuss parameter-efficient models that yield good likelihood values on large sparse networks for which REC is optimal, as well as give an introduction to entropy coding.
REC is introduced in \Cref{sec:method} and, together with Pólya's Urn, is shown to achieve compression results competitive with the state-of-the-art on real-world network datasets in \Cref{sec:experiments}.}

\section{Related Work}\label{sec:related-work}
To the best of our knowledge there is no previous entropy coding method that can scale to large graphs and is optimal for the broad class of edge-permutation invariant (EPI) graphs of \Cref{def:fex}.

A number of previous works have presented adhoc methods for lossless compression of large graphs including Pool Compression \cite{yousuf2022pool}, SlashBurn \cite{lim2014slashburn}, List Merging \cite{grabowski2014tight-LM}, BackLinks \cite{chierichetti2009compressing-BL}, and Zuckerli \cite{versari2020zuckerli}.
In sum, these methods attempt to exploit local statistics of the graph edge-list by defining an ordering of the vertex sequence that is amenable to compression.
See \citet{yousuf2022pool} for an overview of the methods.
Re-ordering techniques would yield no effect for EPI models as all permutations of the edge sequence have the same likelihood, as discussed in \Cref{sec:fee}.
In \Cref{sec:experiments}, \Cref{table:results} we compare the performance of these methods with that of entropy coding under \changed{Pólya's Urn model} using our method and show that it performs competitively and can even outperform previous methods on sparser datasets.

\citet{chen2021order} develop a deep latent variable model where the edges are observations and the ordering of the vertices are latent.
This could be made into a compression algorithm for a dataset of graphs by combining it with the entropy coder developed in \cite{townsend2019}.
It is unclear if this could be used for one-shot compression, where only a single graph is available.
Furthermore, the authors present results on small graphs with a few hundred nodes and edges, while our setting is that of millions.

Another machine learning method for lossless graph compression is Partition and Code \cite{bouritsas2021partition}.
The method decomposes the graph into a collection of subgraphs and performs gradient descent to learn a dictionary code over subgraphs.
While achieving good compression performance on small graph datasets, it is unclear if these methods can scale to networks with millions of nodes and edges.

Our work draws upon a large body of work on the statistical modeling of networks \cite{newman2018networks, bloem2017random, crane2018edge} which is reviewed in \Cref{sec:background-modeling}.

\section{Notation \changedtwo{and Setup}}\label{sec:notation}

We use $[n]$ to represent a set of integers $\{1, \dots, n\}$ and use superscript to represent sequences such as $x^n = x_1, \dots, x_n$.

\changedtwo{All graphs in this work are labeled, have a fixed number of nodes ($n$), a variable number of edges ($m$), and are in general non-simple (i.e., allow loops and repeated edges).}

Graph sequences are represented similarly to \cite{bloem2017random}.
A graph sequence of $n$ nodes, where an edge is added at each step, can be represented as a sequence of vertex elements $v_i$ taking on values in the alphabet $[n]$ with the $i$-th edge defined as $e_i = (v_{2i-1}, v_{2i})$.
We refer to the edge and vertex sequences interchangeably.
\changed{The $i$-th graph is taken to be $G_i = \{\{v_1, v_2\}, \dots, \{v_{2i-1}, v_{2i}\}\}$ and the $k$-th vertex of an edge is indicated by $e[k]$.}

The likelihood, and hence the information content, of $v^{2m}$ and $e^m$ are the same, but they differ from that of $G_m$.
A sequence of vertices or edges carries the information regarding the order in which the vertices and edges appeared in the graph sequence, which may be significantly larger than that of the graph.

\changedthree{
The ascending factorial function $a: \mathbb{R}\times\mathbb{N} \mapsto \mathbb{R}$ is defined by $a(x, k) = x(x+1)(x+2)\dots(x+k-1)$, for $k > 0$, with $a(x, 0) = 1$ for all $x$, and is abbreviated as $x^{\uparrow k}$.
}

All distributions are discrete, logarithms are base 2, and we refer to the probability mass function and cumulative distribution function as the PMF and CDF.

\section{Background}\label{sec:background}
\subsection{Modeling Real-World Networks}\label{sec:background-modeling}
Most real-world networks are \emph{sparse} in the sense that the number of edges $m$ is significantly smaller than the maximum number of possible edges $\binom{n}{2}$ in a simple graph \cite{newman2018networks}.
A graph that is not sparse is known as a \emph{dense} graph.
Compression algorithms with complexity that scales with the number of edges instead of nodes are therefore more efficient for real-world network graphs.

Some networks exhibit \emph{small-world} characteristics where most nodes are not connected by an edge but the degree of separation of any 2 nodes is small \cite{newman2018networks}.
The degree distributions are heavy-tailed due to the presence of hubs, i.e., vertices with a high degree of connectivity.
Random graph models that assign high likelihood to graphs with small-world characteristics are thus preferred to model and compress these network types.

\changedthree{
\subsection{Random Graph Models}\label{sec:background-inf-models}

Modeling random graphs is a well studied field dating back to the early work of Erdős, Rényi, and Gilbert \cite{erdHos1960evolution} where either graphs with the same number of edges are equally likely or an edge is present in the graph with a fixed probability $p$.
The field has since evolved to include the stochastic block model \cite{holland1983stochastic}, where the edge probability is allowed to depend on its endpoints, as well as its mixed-membership variant \cite{airoldi2008mixed}.
More recently, \cite{caron2017sparse, cai2016edge, crane2018edge} have found some success in modeling real-world network graphs.
These models have been used in a number of applications including clustering \cite{sewell2020model}, anomaly detection \cite{luo2021anomalous}, link-prediction \cite{JMLR:v17:16-032}, community detection \cite{zhang2022node}, and have been extended to model hierarchical networks \cite{dempsey2021hierarchical}.

The model used in this work assigns likelihood to a vertex- or edge-sequence autoregressively.
Neural network models have also been used for autoregressive graph modeling such as \cite{you2018graphrnn, bacciu2020edge, goyal2020graphgen}. See \cite{zhu2022survey} for a survey.
The likelihood assigned by these models usually depend on the order in which vertices or edges were added to the graph, in contrast to the Pólya's Urn based-model used in this work which is order-invariant.

}

\subsection{Lossless Compression with Entropy Coding}
\label{sec:background-entropy-coding}
Given a discrete distribution $P$ over alphabet $\X$, the objective of lossless compression is to find a code $C\colon\X \mapsto \{0,1\}^\star$ that minimizes the average code-length $\E{x \sim P}{\ell(C(x))}$.
Shannon's source coding theorem \cite{cover1999elements} guarantees that the average code-length is lower bounded by the \emph{entropy} $H(P) = \E{x \sim P}{-\log P(x)}$.
A code with average code-length within 1 bit of the entropy is called \emph{optimal} and assigns a code-word to $x \in \X$ with length close to its \emph{information content} $\ell(C(x)) \approx - \log P(x)$.

\emph{Asymmetric Numeral Systems} (ANS) is an algorithm that can implement an optimal code for $(\X, P)$ \cite{duda2009}.
ANS maps an instance $x \in \X$ to an integer state $s$ via an encoding function.
The instance can be losslessly recovered from the state via a decoding function which inverts the encoding procedure.
Both encoding and decoding functions require access to the PMF and CDF as well as the current state.
The last symbol encoded during compression is the first to be decoded during decompression implying ANS is \emph{stack-like} (in contrast to most other entropy coders which are \emph{queue-like}).
For a more detailed discussion on ANS see \cite{townsend2020, townsend2021lossless}.

A key point to our method is that ANS can be used as a sampler by initializing the state to a random integer and performing successive decode operations with the distribution one wishes to sample from.
This operation is invertible in the sense that the random seed can be fully recovered by performing an ANS encode with the sampled symbols.
Decoding with distribution $P$ reduces the size of the ANS state by approximately $\log 1/P(x)$ bits, where $x$ is the sampled symbol, while encoding increases the state by the same amount.
This is illustrated in \Cref{fig:bb-exg}.

\subsection{Bits-Back coding with ANS}\label{sec:bits-back-coding}
Bits-Back coding with ANS (BB-ANS) is a general entropy coding method for latent variable models \cite{townsend2019}.
Given a model with observations $x$ and latents $z$ defined by an approximate posterior $Q(z \given x)$, conditional likelihood $P(x \given z)$ and prior $P(z)$, BB-ANS can perform tractable lossless compression of a sequence of observations.

In sum, BB-ANS projects the data $x$ into the extended state-space $(x, z)$ by sampling from $Q(z \given x)$ while using the ANS state as a random seed and then encoding $(x, z)$ with the joint distribution $P(x \given z)P(z)$ \cite{ruan2021}.
The resulting ANS state is then used as a random seed to compress the next observation using the same procedure.
Sampling with $Q(z \given x)$ results in a reduction of the number of bits in the ANS state.
The average number of bits achieved by BB-ANS is a well known quantity called the (negative) \emph{Evidence Lower Bound (ELBO)}  which is an upper bound on the entropy of the data \cite{jordan1999introduction}.
This method has been extended to hierarchical latent variable models \cite{kingma2019bit} and state space models \cite{townsend2021a}.

Our method can be understood under this framework as constructing a latent variable model with an exact approximate posterior, i.e., $Q(z \given x) = P(z \given x)$.
The distribution of the latent variable in our method is a Markov chain of sub-graphs of the graph being compressed.

The first step to encode a symbol is to decode a latent variable with the approximate posterior.
As mentioned, this yields a savings of approximately $\log 1/Q(z \given x)$ bits per observation.
However, when encoding the first observation, the ANS stack is initially empty and therefore $\log 1/Q(z \given x)$ bits must be encoded into the ANS state to allow for sampling.
This initial overhead is known as the \emph{initial bits problem} \cite{townsend2019} and in general is amortized by compressing multiple observations.

\subsection{Entropy Coding for Large Alphabets}
\label{sec:background-ec-large-alphabets}
A large alphabet can render entropy coding intractable even if the PMF is tractable.
For example, the alphabet size $\abs{\X}$ of a graph grows exponentially with the number of nodes.
Storing the PMF and CDF values in a hashtable provides fast look-ups but will in general require $\BigO(\abs{\X})$ memory which is impractical.

A common strategy to trade-off computational and memory complexity is to avoid compressing $(\X, P)$ directly and instead construct a one-to-many mapping, possibly a bijection, between $\X$ and an alphabet of proxy sequences $y^m$.
For example, we can decompose a graph into a sequence of $\binom{n}{2}$ edge incidence variables each with alphabet sizes $\abs{\Y_i} = 2$.
Compression is then performed autoregressively with $y_i$ conditioned on the history $y^{i-1}$ and only the CDF and PMF of the current variable are stored in memory.
The worst-case computational complexity is $\BigO\left(\sum_{i=1}^m c_i\right)$, where $c_i$ represents the complexity of computing the PMF and CDF of $y_i$, while memory is $\BigO(1)$.
If $c_i$'s can be kept small then the trade-off can be made useful for practical applications.

Our method can be seen as an instance of this technique where the graph is mapped to its edge-sequence with a random order.

\section{Method}\label{sec:method}
    \begin{figure}[t]
        \centering
        \includegraphics[width=1.0\columnwidth]{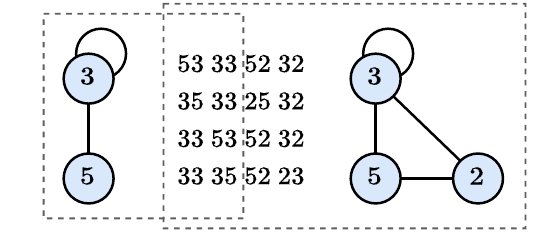}
        \caption{Example of the correspondence between vertex sequences and graph.
        Middle: 4 vertex sequences all corresponding to the same graphs. The vertices are labeled by single-digit integers and have been grouped to highlight the edge they belong to.
        Right: The graph corresponding to all 4 sequences.
        Left: The graph corresponding to the sub-sequences made up of the first 2 edges. In this particular case, all elements of the equivalence class of the graph are shown (i.e., all 4 sub sequences 53 33, 35 33, 33 53, and 33 35).
        For edge-permutation invariant models (\Cref{def:fex}), equivalent sequences are assigned the same likelihood.}
        \label{fig:seq-to-graph}
        \vskip -0.2in
    \end{figure}
    In this section we present our entropy coder \emph{Random Edge Coding} (REC) which is, to the best of our knowledge, the first to perform optimal one-shot compression of a broad class of large graphs.
    
   \changedtwo{Our method is optimal for any model over graphs satisfying edge-permutation invariance (\Cref{def:fex}).
    \Cref{sec:polya} discusses an example of such a model, known as Pólya's Urn \cite{mahmoud2008polya}, which is parameter-free.
    This model is used in \Cref{sec:experiments} to achieve competitive performance on network data.}
    
    See \Cref{sec:notation} for a review of the notation used.

    \subsection{\changedtwo{Vertex- and Edge-Permutation Invariance}}\label{sec:fee}
    REC is an optimal entropy coder for PMFs over vertex sequences that are invariant to permutations of the edges and of vertices within an edge.
    This is characterized formally by the following definition.

    \begin{definition}[Edge-Permutation Invariance (EPI)]\label{def:fex}
    Let $v^{2m}$ be a vertex sequence with edges defined as $e_i = (v_{2i-1}, v_{2i})$ and $\sigma$ an arbitrary permutation function over $m$ elements.
    Given a collection $(\pi_k)_{k=1}^m$ of permutation functions over 2 elements, each over integers $(2k-1, 2k)$, we say that a model is \emph{edge-permutation invariant} if the following holds
    \begin{align}
        \Pr(e_1, \dots, e_m) = \Pr(\tilde{e}_{\sigma(1)}, \dots, \tilde{e}_{\sigma(m)}),
    \end{align}
    where 
    \begin{align}
        \tilde{e}_k = (v_{\pi_k(2k-1)}, v_{\pi_k(2k)}).
    \end{align}
    \end{definition}    

    \changedtwo{
    A stronger property that implies EPI is that of \emph{vertex-permutation invariance}, which coincides with the common definition of finite exchangeability of sequences. 
    
    \begin{definition}[Vertex-Permutation Invariance (VPI)]\label{def:fvx}
    Let $v^{2m}$ be a vertex sequence and $\pi$ an arbitrary permutation function over $2m$ elements.
    We say that a model is \emph{vertex-permutation invariant} if the following holds
    \begin{align}
        \Pr(v_1, \dots, v_{2m}) = \Pr(v_{\pi(1)}, \dots, v_{\pi(2m)}),
    \end{align}
    for all permutations $\pi$.
    \end{definition}

    The model we present next is VPI and therefore also EPI.
    }

    \subsection{Pólya's Urn Model}\label{sec:polya}

    Pólya's Urn (PU) model \cite{mahmoud2008polya} defines a joint probability distribution over a vertex sequence $v^k$ that is VPI.
    The generative process of PU is as follows.
    An urn is initialized with $n$ vertices labeled from $1$ to $n$.
    At step $i$, a vertex is sampled from the urn, assigned to $v_i$, and then returned to the urn together with an extra copy of the same vertex.    
    
    The joint PMF is defined via a sequence of conditional distributions
    \begin{align}\label{eq:fhm-conditional}
        \Pr(v_{i+1} \given v^i) \propto d_{v^i}(v_{i+1}) + 1,
    \end{align}
    where $d_{v^i}(v) = \sum_{j=1}^i 1\{v = v_j\}$ is the degree of vertex $v$ in $v^i$.
    The joint distribution can be expressed as
    \begin{align}\label{eq:polya-joint}
        \Pr(v^k) = \frac{1}{n^{\uparrow k}}\prod_{v \in [n]} d_{v^k}(v)!,
    \end{align}
    where equation \eqref{eq:polya-joint} depends only on the degrees of the vertices in the final sequence $v^k$, guaranteeing VPI.
    We provide a complete derivation in \Cref{appendix:proof-polya}.

    Although the generative process is fairly simple, the resulting joint distribution can achieve competitive likelihood values for real-world networks with small-world characteristics as indicated in \Cref{table:results}.
    To understand why, note that equation \eqref{eq:polya-joint} assigns higher likelihood to graphs with non-uniform degree distributions.
    The product of factorial terms is the denominator of the multinomial coefficient, and hence is largest when some vertex dominates all others in degree.

    PU is parameter-free and therefore requires $0$ bits to be stored and transmitted.
    The PMF and CDF of PU, needed for entropy coding with ANS, can be computed with integer arithmetic. 
    It therefore does not suffer from floating-point rounding errors, which is common in neural compression algorithms \cite{yangneural2023, balle2019integer}.
    \subsection{\changedthree{Random Edge Coding} (REC)}
    \changedtwo{
    In this section we describe our method for the case of simple graphs (i.e., no loops or repeated edges).
    The extension to non-simple graphs is explained in \Cref{sec:method-non-simple}, while hyper and direct graphs are handled in \Cref{sec:hypergraphs}.
    }
    
    Compressing the graph with entropy coding requires computing the PMF and CDF of the graph from the PMF of the vertex sequence.
    Although the PMF of the graph \changed{for PU has a} closed form expression, the alphabet size grows exponentially with the number of nodes which makes direct entropy coding infeasible (see \Cref{sec:background-ec-large-alphabets}).

    We employ the technique discussed in \Cref{sec:background-ec-large-alphabets} to trade-off computational and memory complexity by mapping the graph to a sequence of equivalence classes containing vertex sequences.

    The vertex equivalence class $\eclass{v^{2m}}$ of a graph $G$ with $m$ edges is the set of all vertex sequences that map to $G$ (see \Cref{fig:seq-to-graph} for an example).
    Models satisfying \Cref{def:fex} assign equal likelihood to sequences in the same equivalence class.
    Therefore, the negative loglikelihood of $G, \eclass{v^{2m}}$, and $v^{2m}$ are related by
    \begin{align}\label{eq:info-content}
        \log1/\Pr(G) 
        &= \log1/\Pr(\eclass{v^{2m}})  \\
        &= \log1/\Pr(w^{2m}) - \log\abs{\eclass{w^{2m}}},
    \end{align}
    for any equivalent $w^{2m}, v^{2m}$ that map to graph $G$.
    The size of the equivalence class can be computed by counting the number of edge-permutations and vertex-permutations within an edge, which add up to
    \begin{align}\label{eq:savings}
        \log\abs{\eclass{w^{2m}}} = m + \log m!.
    \end{align}
    The relationship between the likelihoods implies that we can reach the information content of the graph by compressing one of its vertex sequences if we can somehow get a number of bits back equal to $\log\abs{\eclass{w^{2m}}}$.
    This leads to the naive \Cref{alg:naive}, which we describe below, that suffers from the initial bits issue (see \Cref{sec:bits-back-coding}).    
    
    \begin{algorithm}[h]
       \caption{Naive Random Edge Encoder}
       \label{alg:naive}
    \begin{algorithmic}
       \STATE {\bfseries Input:} Vertex sequence $v^{2m}$ and ANS state $s$.
       \STATE 1) Edge-sort the vertex sequence $v^{2m}$
       \STATE 2) Decode a permutation $\sigma$ uniformly w/ prob. $1/\abs{\eclass{v^{2m}}}$
       \STATE 3) Apply the permutation to the vertex sequence
       \STATE 4) Encode the permuted vertex sequence
       \STATE 5) Encode $m$ using $\log m$ bits
    \end{algorithmic}
    \end{algorithm}
    \changed{
    At step 1) we sort the vertex sequence without destroying the edge information by first sorting the vertices within an edge and then sorting the edges lexicographically.
    For example, edge-sorting sequence $(34\ 12\ 32)$ yields $(12\ 23\ 34)$.
    }
    In step 2) an index is decoded that corresponds to a permutation function agreed upon by the encoder and decoder, which is applied to the sequence in step 3).
    Note these permutations do not destroy the edge information by design.
    Finally, in step 4), the permuted sequence is encoded using $\Pr(v_{i+1} \given v^i)$ from \Cref{sec:method} followed by the number of edges.
    Decoding a permutation reduces the number of bits in the ANS state by exactly $\log\abs{\eclass{v^{2m}}}$, while encoding the vertices increases it by $\log1/\Pr(v^{2m})$.
    From \eqref{eq:info-content}, the net change is exactly the information content of the graph: $\log1/\Pr(G)$.

    The decoder acts in reverse order and perfectly inverts the encoding procedure, restoring the ANS state to its initial value.
    First, $m$ is decoded.
    Then the sequence is decoded and the permutation is inferred by comparing it to its sorted version.
    Finally, the permutation is encoded to restore the ANS state.
    
    Unfortunately, this method suffers from the initial bits problem \cite{townsend2019}, as the decode step happens before encoding, implying there needs to be existing information in the ANS stack for the bit savings to occur.
    It is possible to circumvent this issue by incrementally sampling a permutation, similarly to \cite{severo2021compressing, severo2021your}.
    This yields \Cref{alg:bb-exg}, which we describe below, and is illustrated in \Cref{fig:bb-exg}.

    \begin{algorithm}[h]
       \caption{Random Edge Encoder}
       \label{alg:bb-exg}
    \begin{algorithmic}
       \STATE {\bfseries Input:} Vertex sequence $v^{2m}$ and ANS state $s$.
       \STATE 1) Edge-sort the vertex sequence $v^{2m}$\\
       \REPEAT
       \STATE 2) Decode an edge $e_k$ uniformly from the sequence
       \STATE 3) Remove $e_k$ from the vertex sequence
       \STATE 4) Decode a binary vertex-index $b$ uniformly in $\{0, 1\}$
       \STATE 5) Encode $e_k[b]$
       \STATE 6) Encode $e_k[1-b]$
       \UNTIL{The vertex sequence is empty}
       \STATE 7) Encode $m$ using $\log m$ bits.
    \end{algorithmic}
    \end{algorithm}

    REC progressively encodes the sequence by removing edges in a random order until the sequence is depleted.
    As before, we edge-sort the vertex sequence in step 1) without destroying the edge information.
    Then, in steps 2) and 3), an edge is sampled without replacement from the sequence by decoding an integer $k$ between $1$ and the size of the remaining sequence.
    Since the graph is undirected, we must destroy the information containing the order of the vertices in the edge.
    To do so, in step 4), we decode a binary index $b$ and then encode vertices $e_k[b], e_k[1-b]$ in steps 5) and 6) using $\Pr(v_{i+1} \given v^i)$.
    Finally, $m$ is encoded.

    The initial bits overhead is amortized as the number of edges grows.
    This makes REC an optimal entropy coder for large EPI graphs, as characterized by the proposition below.

    \begin{proposition}
        Random Edge Coding is an optimal entropy coder for any edge-permutation invariant graph model (\Cref{def:fex}) as $m \rightarrow \infty$.
    \end{proposition}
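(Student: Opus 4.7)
The plan is to track the evolution of the ANS state through \Cref{alg:bb-exg} and show that, modulo overhead that is sublinear in $m$, the resulting code-length equals the information content $\log 1/\Pr(G)$.

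First I would analyze a single iteration of the main loop. With $k$ edges remaining in the sequence, step 2 decodes an edge index uniformly in $\{1,\dots,k\}$, reducing the ANS state by $\log k$ bits; step 4 decodes a bit uniformly, reducing the state by a further $1$ bit; and steps 5--6 encode the two vertices of the sampled edge using the conditional PMF of \eqref{eq:fhm-conditional}, increasing the state by the corresponding negative log probabilities. Summed across all $m$ iterations, the decodes release $\sum_{k=1}^m \log k + m = \log m! + m$ bits in total, which by \eqref{eq:savings} equals $\log\abs{\eclass{v^{2m}}}$. The encodes deposit $\log 1/\Pr(w^{2m})$ bits, where $w^{2m}$ is the vertex sequence obtained by listing the sampled edges and their sampled orientations in the random order chosen by the decodes.

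Second, I would invoke \Cref{def:fex}. Since $w^{2m}$ lies in the equivalence class $\eclass{v^{2m}}$, EPI yields $\Pr(w^{2m}) = \Pr(v^{2m})$, and \eqref{eq:info-content} then identifies the net change in the ANS state as exactly $\log 1/\Pr(G)$. Accounting for the $\log m$ bits encoded in step 7, the total code-length of REC is
\begin{align}
L(G) \;=\; \log 1/\Pr(G) \;+\; \log m \;+\; \Delta_0,
\end{align}
where $\Delta_0$ is the initial bits overhead discussed in \Cref{sec:bits-back-coding}.

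The main obstacle will be bounding $\Delta_0$ and deducing asymptotic optimality from the above expression. A standard bits-back argument gives $\Delta_0 = O(\log m)$, since the largest single decode costs at most $\log m + 1$ bits and each subsequent decode is immediately offset by the two encodes that follow it in the loop. Because any non-degenerate EPI model of interest satisfies $\log 1/\Pr(G) = \omega(\log m)$ as $m\to\infty$, the overhead $\log m + \Delta_0$ is sublinear in $\log 1/\Pr(G)$, so $L(G)/\log(1/\Pr(G)) \to 1$ and the code is asymptotically optimal in the rate sense. A subsidiary step, which I would verify by inspection rather than calculation, is that the encoder is exactly inverted: invertibility of ANS encode/decode pairs, together with the deterministic edge-sort in step 1, allows the decoder to replay \Cref{alg:bb-exg} in reverse and recover each removed edge and its orientation without ambiguity.
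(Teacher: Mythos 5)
Your proposal is correct and follows essentially the same bookkeeping argument as the paper: the decodes release $m + \log m! = \log\abs{\eclass{v^{2m}}}$ bits, the encodes cost $\log 1/\Pr(w^{2m})$, and \eqref{eq:info-content} gives a net change of exactly $\log 1/\Pr(G)$, with the initial bits and the $\log m$ overhead amortized as $m \to \infty$. You are in fact slightly more careful than the paper in making explicit that the sequence actually encoded is a \emph{random} member $w^{2m}$ of the equivalence class and that EPI is precisely what licenses $\Pr(w^{2m}) = \Pr(v^{2m})$ --- a step the paper's proof leaves implicit.
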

    \begin{proof}
        An optimal entropy coder compresses an object to within one bit of its information content (i.e., negative log-likelihood).
        Encoding steps add $\log1/\Pr(v_{i+1} \given v^i)$ bits to the ANS state resulting in a total increase of $\log1/\Pr(v^{2m})$ bits.
        Each decoding operation removes bits from the ANS state and together save $\sum_{i=1}^m (1 + \log i) = \log1/\Pr(\abs{\eclass{v^{2m}}})$.
        From \eqref{eq:info-content}, the net change is exactly the information content of the graph: $\log1/\Pr(G)$.
        The initial and $\log m$ bits (needed to encode $m$) are amortized as $m \rightarrow \infty$.
        Therefore, the number of bits in the ANS state approaches \eqref{eq:info-content}, which concludes the proof.
    \end{proof}

    In \Cref{sec:experiments} and \Cref{table:optimality} we show empirical evidence for the optimality of REC by compressing networks with millions of nodes and edges down to their information content under \changed{Pólya's Urn model}.
    \changedtwo{
    \subsection{Extension to Non-Simple Graphs}\label{sec:method-non-simple}
    If the graph is non-simple then the size of the equivalence class $\eclass{w^{2m}}$ will be smaller.
    The savings in \eqref{eq:savings} must be recalculated by counting the number of valid permutations of edges, and vertices within an edge, that can be performed on the sequence.
    Furthermore, \Cref{alg:bb-exg} must be modified to yield the correct savings.
    Handling repeated loops and repeated edges requires different modifications which we discuss below.
    
    Each non-loop edge doubles the size of the equivalence class, while loops do not as the vertices are indistinguishable and thus permuting them will not yield a different sequence.
    This can be handled by skipping step 4) and setting $b=0$ if $e_k[0] = e_k[1]$.
    
    In general, the number of possible edge-permutations in a non-simple graph $G_m$ with $m$ undirected edges is equal to the multinomial coefficient}
    \begin{align}
        \binom{m}{c_1, c_2, \dots} = \frac{m!}{\prod_{e \in G_m} c_e!}\leq m!,
    \end{align}
    where $c_e$ is the number of copies of edge $e$ and $e \in G_m$ iterates over the \emph{unique} edges in $G_m$.
    Equality is reached when there are no repeated edges ($c_e=1$ for all edges).
    
    To achieve this saving, \Cref{alg:bb-exg} must be modified to sample edges uniformly from the graph \emph{without replacement}.
    In other words, step 2) is generalized to sample $e_k=e$ with probability $c^k_e/k$, where $c^k_e$ are the number of remaining copies of edge $e$ at step $k$.
    The count $c^k_{e}$ is non-increasing for all $e$ due to step 3) which, together with step 2), implements sampling without replacement.
    Furthermore, since all edges will eventually be decoded, the product of counts $\prod_{k=1}^m c^k_{e_k}$ contains all terms appearing in the factorial $c_e!$ for all edges $e$.
    The saving at each step is $-\log c^k_{e_k}/k$ and together will equal the log of the multinomial coefficient
    \begin{align}
    -\log\prod_{k=1}^{m} \frac{c^k_{e_k}}{k} = \log\frac{m!}{\prod_{e \in G} c_e!}.
    \end{align}

    These modifications together guarantee that REC is optimal for non-simple graphs.

    \subsection{Extension to Hypergraphs and Directed Graphs}\label{sec:hypergraphs}
    REC can be trivially extended to hypergraphs, where edges can have more than 2 nodes, directed graphs, and directed hypergraphs.

    For directed graphs, we need only ignore step 4) in \Cref{alg:bb-exg} and fix $b=0$.
    This guarantees the order information between vertices in an edge is preserved.

    For hypergraphs, steps 4-6) are generalized to the same sampling-without-replacement mechanism of step 3).
    After decoding an edge, the algorithm decodes vertices without replacement until the edge is depleted and encodes them in the order they appear.    
    It then proceeds to decode another edge and repeats these 2 steps until the sequence is depleted.

    \subsection{Complexity Analysis}
    For a graph with $m$ edges, the worst-case computational complexity of encoding and decoding with REC under \changed{Pólya's Urn model} is quasi-linear in the number of edges, $\BigO(m\log m)$, while the memory is linear: $\BigO(m)$.

    We discuss only encoding with REC as decoding is analogous.
    The first step during encoding is to sort the edge-sequence which has worst-case complexity $\BigO(m\log m)$.
    Then, the edge-list is traversed and the frequency count of all vertices are stored in a binary search tree (BST) with at most $2m$ elements ($\BigO(m)$ memory).
    The BST allows for worst-case look-ups, insertions, and deletions in $\BigO(\log m)$, which are all necessary operations to construct $\Pr(v_{i+1} \given v^i)$, as well as the CDF, used during entropy coding.
    Traversing the edge-list, together with the updates to the BST, require $\BigO(m \log m)$ computational complexity in the worst-case.

    \subsection{\changedthree{Random Edge Coding with Non-EPI Models}}
    While REC is only optimal for EPI models, it can still be paired with any probability model over vertex sequences that have well defined conditional distributions such as \eqref{eq:fhm-conditional}.

    For models that are not EPI the order of the vertices will affect the likelihood assigned to the graph.
    REC in its current form will discount at most $m + \log m!$ bits (with equality when the graph is simple) and all vertex sequences will have equal probability of appearing.
    The selected sequence $v^k$ will be determined by the initial bits present in the ANS stack (see \Cref{sec:bits-back-coding}).
    The number of bits needed to store the graph (i.e. information content) will therefore be,
    \begin{align}
        \log 1/\Pr(v^{2m}) - \left(\tilde{m} + \log \binom{m}{c_1, c_2, \dots}\right),
    \end{align}
    where $\tilde{m}$ is the number of non-loop edges and $v^{2m}$ the random sequence selected via the sampling-without-replacement mechanism of REC.

\section{Experiments}
\label{sec:experiments}
\begin{figure*}[t]
\centering
\begin{minipage}[t]{0.8\textwidth}
\captionof{table}{
Optimality of Random Edge Coding (REC) with \changed{Polya's Urn (PU) model}.
\textbf{REC achieves the optimal theoretical value for one-shot lossless compression under \changed{PU} for various datasets.}
Compressing the vertex sequence under the same model without REC would require a significantly higher number of bits-per-edge as indicated by the sequence's negative log-likelihood (NLL).
All units are in bits-per-edge.}
\vskip -0.2in
\label{table:optimality}
\vskip 0.15in
\begin{center}
\begin{small}
\begin{sc}
\begin{tabular}{rcccccc}
\toprule
Network     & Seq. NLL      & REC (Ours) & Graph NLL (Optimal)         & Gap (\%) \\
\midrule 
YouTube     & 37.91         & 15.19         & 15.19                  & 0.0   \\
FourSquare  & 31.14         & 9.96          & 9.96                   & 0.0   \\
Digg        & 32.67         & 10.62         & 10.62                  & 0.0   \\
Gowalla     & 32.11         & 12.19         & 11.69                  & 4.3   \\
Skitter     & 37.22         & 14.26         & 14.26                  & 0.0   \\
DBLP        & 35.48         & 15.92         & 15.92                  & 0.0   \\
\bottomrule
\end{tabular}
\end{sc}
\end{small}
\end{center}
\vskip -0.1in
\end{minipage}
\end{figure*}
\begin{figure*}[t]
\centering
\begin{minipage}[t]{0.9\textwidth}
\captionof{table}{Lossless compression of real-world networks.
\changed{Pólya's Urn (PU)} model, together with Random Edge Coding (REC), achieves competitive performance on some datasets and can even outperforms the current state-of-the-art on sparser social networks (black columns to the left).
\textbf{The last 2 columns (in gray) highlight the situations where \changed{PU} is expected to under-perform.}
While the compression with REC is optimal, the final results depend on the likelihood assigned to the graph under \changed{PU}, which is why ad hoc methods can achieve a better performance.
The best results are highlighted in bold.
Results for methods beyond ours are the ones reported by \cite{yousuf2022pool} in Table 4.
Units for the bottom section are in bits-per-edge and \textbf{lower numbers indicate better performance.}
}

\vskip -0.2in
\label{table:results}
\vskip 0.15in
\begin{center}
\begin{small}
\begin{sc}
\begin{tabular}{rcccccc}
\toprule
                          & \multicolumn{4}{c}{Social Networks}                                       & \multicolumn{2}{c}{\color{gray}{Others}} \\
                          & YouTube        & FourSq.       & Digg           & Gowalla        & \color{gray}{Skitter}       & \color{gray}{DBLP         }       \\
\midrule                                    
\# Nodes                  & 3,223,585      & 639,014       & 770,799        & 196,591        & \color{gray}{1,696,415 }    & \color{gray}{317,080      }       \\
\# Edges                  & 9,375,374      & 3,214,986     & 5,907,132      & 950,327        & \color{gray}{11,095,298}    & \color{gray}{1,049,866    }       \\
$10^6\times$Density       & 1.8            & 15.8          & 19.8           & 50.2           & \color{gray}{7.7       }    & \color{gray}{20.9         }       \\
\midrule                            
(Ours) PU w/ REC  & \textbf{15.19} & 9.96 & 10.62 & 12.19 & \color{gray}{14.26}         & \color{gray}{15.92}      \\
Pool Comp.                & 15.38 & \textbf{9.23} & 11.59          & \textbf{11.73} & \color{gray}{7.45 }         & \color{gray}{\textbf{8.78}}     \\
Slashburn                 & 17.03          & 10.67         & \textbf{9.82}  & 11.83          & \color{gray}{12.75}         & \color{gray}{12.62        }     \\
Backlinks                 & 17.98          & 11.69         & 12.56          & 15.56          & \color{gray}{11.49}         & \color{gray}{10.79        }     \\
List Merging              & 15.80          & 9.95          & 11.92          & 14.88          & \color{gray}{\textbf{8.87}} & \color{gray}{14.13        }      \\
\bottomrule
\end{tabular}
\end{sc}
\end{small}
\end{center}
\vskip -0.1in
\end{minipage}
\end{figure*}

In this section, we showcase the optimality of REC on large graphs representing real-world networks.

\changed{
We entropy code with REC using Pólya's Urn (PU) model and compare the performance to state-of-the-art compression algorithms tailored to network compression.}
We report the average number of bits required to represent an edge in the graph (i.e., bits-per-edge) as is common in the literature.

We used datasets containing simple network graphs with small-world statistics (see \Cref{sec:background-modeling}) such as YouTube, FourSquare, Gowalla, and Digg \cite{rossi2015network} which are expected to have high likelihood under \changed{PU}.
As negative examples, we compress Skitter and DBLP networks \cite{snapnets}, where we expect the results to be significantly worse than the state of the art, as these networks lack small-world statistics.
The smallest network (Gowalla) has roughly $200$ thousand nodes and $1$ million edges, while the largest (YouTube) has more than $3$ million nodes and almost $10$ million edges.

We use the ANS implementation available in Craystack \cite{townsend2020hilloc, craystack}.
The cost of sending the number of edges $m$ is negligible but is accounted for in the calculation of the bits-per-edge by adding $32$ bits.

To compress a graph, the edges are loaded into memory as a list where each element is an edge represented by a tuple containing two vertex elements (integers).
At each step, an edge is sampled without replacement using an ANS decode operation as described in \Cref{alg:bb-exg}.
Encoding is performed in a depth-first fashion, where an edge is encoded to completion before moving on to another.
\changed{Then, a vertex is sampled without replacement from the edge and entropy-encoded using \eqref{eq:fhm-conditional}.}
The process repeats until the edge is depleted, and then starts again by sampling another edge without replacement.
The process terminates once the edge-list is empty, concluding the encoding of the graph.
Decoding is performed in reverse order and yields a vertex sequence that is equivalent (i.e., maps to the same graph as) the original graph.

\changed{Code implementing Random Edge Coding, Pólya's Urn model, and experiments are available at \url{https://github.com/dsevero/Random-Edge-Coding}.}

\subsection{Optimality of Random Edge Coding}
We showcase the optimality of REC by compressing real-world graphs to the information content under the \changed{Pólya's Urn (PU) model} (see \Cref{sec:background-entropy-coding}).

\changedtwo{\Cref{table:optimality} shows the negative loglikelihood (NLL) of the vertex sequence and graph under PU.}
As discussed in \Cref{sec:background-entropy-coding} the graph's NLL is the value an optimal entropy coder should achieve to minimize the average number of bits with respect to the model.
REC can compress the graph to its NLL (as indicated by the last column of \Cref{table:optimality}) in all datasets except Gowalla.
Compressing the graph as a sequence of vertices (i.e., without REC) would require a number of bits-per-edge equal to the sequence's NLL, which is significantly higher than the NLL of the graph as can be seen by the first column of \Cref{table:optimality}.

As these methods evolve to achieve better likelihood values the compression performance is expected to improve automatically due to the optimality of REC.

\subsection{Compressing Real-World Networks}
In \Cref{table:results} we compare the bits-per-edge achieved by \changed{PU} using REC with current state-of-the-art algorithms for network data.
\changed{PU} performs competitively on all social networks and can even outperform previous works on networks such as YouTube \cite{rossi2015network}.

The likelihood assigned by \changed{PU} for non-social networks is expected to be low, resulting in poor compression performance, as indicated by the last 2 gray columns of \Cref{table:results}.
The performance of \changed{PU} deteriorates as the edge density increases and is visible from \Cref{table:results}.

While the compression with REC is optimal for \changed{PU}, the final results depend on the likelihood assigned to the graph under \changed{PU}, which is why ad hoc methods can achieve better performance.
Nonetheless, the bits-per-edge of \changed{PU} with REC is close to that of current methods.

The graph NLL for the Gowalla network under \changed{PU} shown in \Cref{table:optimality} ($11.68$) is less than the best compression result ($11.73$) achieved by Pool Compression \cite{yousuf2022pool}.
Assuming the network will grow with similar statistics, \changed{PU} may eventually surpass Pool Compression as there would be more edges to amortize the overheads in compression.

\section{Conclusion}
In this paper we developed the first algorithm capable of performing tractable one-shot entropy coding of large edge-permutation invariant graphs: \emph{Random Edge Coding} (REC).
We provide an example use case with the self-reinforcing Pólya's Urn model \cite{mahmoud2008polya} which performs competitively with state of the art methods despite having $0$ parameters and being void of floating-point arithmetic.

Our method can be seen as an extension of the Bits-Back with ANS method \cite{townsend2019} that is applicable to one-shot compression, i.e., when only a single sample from the data distribution is available.
REC can be trivially extended to hypergraphs and directed edges as mentioned in \Cref{sec:hypergraphs}.

Any model satisfying edge-permutation invariant (\Cref{def:fex}) can be used for optimal compression with REC, including neural network based models.
Learning exchangeable models has been explored in the literature \cite{niepert2014exchangeable, bloem2020probabilistic} but, to the best of our knowledge, using them for compression of graphs and other structured data is an under-explored field.

Pólya's Urn satisfies edge-permutation invariance through the invariance of the PMF to permutations of the vertices, which is a sufficient, but not necessary, condition.
An interesting direction to investigate is if there are similar models that are strictly edge-permutation invariant, that is, the PMF is invariant to permutations of edges and vertices within an edge, but not to permutation of vertices from different edges.
\Cref{appendix:other-fexg} discusses this direction further.

In general, a trade-off exists between the model performance and the complexity required to compute the conditional distributions.
\changed{Pólya's Urn model} lies on an attractive point of this trade-off curve, but there might exist other methods that perform better without increasing complexity significantly.
We think this is a promising line of work that can yield better likelihood models for network data and can provide a principled approach to lossless compression of these data types.

\bibliography{main}
\bibliographystyle{icml2023}

\newpage
\appendix
\onecolumn
\changed{
\section{Proofs}
\subsection{Pólya's Urn Model is Edge-Permutation Invariant (\Cref{def:fex})}\label{appendix:proof-polya}
In this section we prove Pólya's Urn model is vertex-permutation invariant (a sufficient, but not necessary, condition for edge-permutation invariance) and thus can be used for optimal compression with Random Edge Coding.
The proof follows from directly computing the joint $\Pr(v^k)$ from the conditionals $\Pr(v_{i+1} \given v^i)$ and showing that it depends on factors that are invariant to permutations of the vertices in $v^k$.

The conditional is proportional to the degree and is biased by the parameter $\beta$ (which in our case was fixed to $\beta=1$),
\begin{align}
    \Pr(v_{i+1} \given v^i) \propto d_{v^i}(v_{i+1}) + \beta,
\end{align}
with normalizing constant
\begin{align}
    \sum_{v_{i+1} \in [n]} \left(d_{v^i}(v_{i+1}) + \beta\right) = i + n\beta.
\end{align}
The joint is defined as the product of the conditionals,
\begin{align}
    \Pr(v^k) = \prod_{i=0}^{k-1} \frac{d_{v^i}(v_{i+1}) + \beta}{i + n\beta}.
\end{align}
At step $i$, the generative process appends a vertex to the existing sequence resulting in the product of non-decreasing degrees (plus the bias $\beta$) in the numerator.
We can regroup the degree terms and rewrite it as a function of the final degree $d_{v^k}$.
For example, consider the following sequence and its joint distribution
\begin{align}
    v^k &= 12\ 23\ 21 \\
    \Pr(v^k) 
    &= \frac{\overbrace{\beta}^{v_1=1}}{n\beta}\cdot
       \frac{\overbrace{\beta}^{v_2=2}}{1 + n\beta}\cdot
       \frac{\overbrace{1 + \beta}^{v_3=2}}{2 + n\beta}\cdot
       \frac{\overbrace{\beta}^{v_4=3}}{3 + n\beta}\cdot
       \frac{\overbrace{2 + \beta}^{v_5=2}}{4 + n\beta}\cdot
       \frac{\overbrace{1 + \beta}^{v_6=1}}{5 + n\beta} \\
    &= \frac{1}{\prod_{i=0}^{k-1}(i + n\beta)}\cdot
       \underbrace{\beta\cdot(1 + \beta)}_{v_1=v_6=1}\cdot
       \underbrace{\beta\cdot(1 + \beta)\cdot(2 + \beta)}_{v_2=v_3=v_5=2}\cdot
       \underbrace{\beta}_{v_4=3}.
\end{align}

In general, the joint takes on the form
\begin{align}
    \Pr(v^k) 
    &= \frac{1}{\prod_{i=0}^{k-1}(i + n\beta)}\prod_{v \in [n]} (\beta)(1 + \beta)\dots(d_{v^k}(v) - 1 + \beta) \\
    &= \frac{1}{(n\beta)^{\uparrow k}}\prod_{v \in [n]} \beta^{\uparrow d_{v^k}(v)}, \\
\end{align}
where $x^{\uparrow k}=x(x+1)\dots(x+k-1)$ is the ascending factorial.
The expression on the right is clearly invariant to permutations of the elements in $v^k$, which concludes the proof.
}

\changed{
\section{Other Edge-Permutation Invariant Models}
\label{appendix:other-fexg}
\subsection{Extended Pólya's Urn Model}
Note that assigning a unique bias $\beta_v$ to each vertex $v$ will not break the edge-permutation invariance of the Pólya's Urn model, as can be seen from
\begin{align}
    \Pr(v_{i+1} \given v^i) =\frac{d_{v^i}(v_{i+1}) + \beta_{v_{i+1}}}{i + \sum_{v \in [n]}\beta_v},
\end{align}
with corresponding joint distribution
\begin{align}
    \Pr(v^k) 
    &= \frac{1}{(\sum_{v \in [n]}\beta_v)^{\uparrow k}}\prod_{v \in [n]} (\beta_v)^{\uparrow d_{v^k}(v)}. \\
\end{align}
The parameters $\{\beta_v\}_{v \in [n]}$ can be learned via gradient descent methods as the gradient of the joint is easily computable.
However, this model has $n$ parameters, one for each vertex, which would need to be transmitted together with the model depending on how the model generalizes as the network grows.
We did not explore this direction.
}

\changed{
\subsection{Strictly Edge-Permutation Invariant Models via nested Deep Sets \cite{zaheer2017deep}}
In this section we outline a general framework based on \cite{zaheer2017deep, hartford2018deep, meng2019hats} to construct joint distributions that are edge-permutation (EPI) invariant, but are not invariant to permutations of vertices between different edges.
We refer to these models as \emph{strictly} EPI.

Let $\psi\colon [n]^2 \mapsto \mathbb{R}$ and $\Psi\colon \bigcup_{k \in \mathbb{N}} \mathbb{R}^k \mapsto \mathbb{R}^+$ be functions invariant to permutation of their arguments, possibly parameterized by some neural network.
The following joint distribution is clearly EPI, 
\begin{align}
    \Pr(v^{2m}) \propto \Psi(\psi(v_1, v_2), \psi(v_3, v_4), \dots, \psi(v_{2m-1}, v_{2m})).
\end{align}
However, the joint distribution may not be invariant to permutations of vertices between different edges (as intended, making it strictly EPI).
As a concrete example, take
\begin{align}
    \psi(v, w) &= \langle\theta_v, \theta_w\rangle \\ 
    \Psi(\phi_1, \dots, \phi_{2m}) &= \sum_{i \in [2m]} \exp(\phi_i),
\end{align}
where $\theta_v, \theta_w \in \mathbb{R}^\ell$ are embeddings that can be learned and $\langle\cdot,\cdot\rangle$ is the inner-product.

To apply Random Edge Coding, we need to define the conditional distributions
\begin{align}
    \Pr(v_{2i}, v_{2i-1} \given v^{2(i-1)}) = \frac{\Pr(v^{2i})}{\sum_{v_{2i}, v_{2i-1}}\Pr(v^{2i})}.
\end{align}
This model can also be learned via stochastic gradient descent but quickly becomes intractable in the form presented for graphs with millions of edges.
We did not explore this direction.
}

\end{document}